\relax
\documentclass[letterpaper]{article} %
\usepackage{aaai22}  %
\usepackage{times}  %
\usepackage{helvet}  %
\usepackage{courier}  %
\usepackage[hyphens]{url}  %
\usepackage{graphicx} %
\urlstyle{rm} %
\usepackage{natbib}  %
\usepackage{caption} %
\DeclareCaptionStyle{ruled}{labelfont=normalfont,labelsep=colon,strut=off} %
\frenchspacing  %
\setlength{\pdfpagewidth}{8.5in}  %
\setlength{\pdfpageheight}{11in}  %
\usepackage{algorithm}
\usepackage{algorithmic}
\usepackage{graphicx}
\graphicspath{{figures/}{../figures/}{figs/}{../figs/}}
\usepackage{fancyhdr}
\usepackage{color}
\usepackage{url}
\usepackage{multirow}
\usepackage{graphicx} %
\usepackage{times}
\usepackage{latexsym}
\usepackage{multicol}
\usepackage{multirow}

\usepackage{placeins}

\usepackage{tabularx} %
\usepackage{amssymb}%
\usepackage{amsmath}
\usepackage{mathtools}
\usepackage{pifont}%

\usepackage{ragged2e}

\usepackage{hyperref}
\usepackage{url}
\usepackage{dirtytalk}

\newcommand{\sref}[1]{Section~(\ref{#1})} 
\newcommand{\eref}[1]{Eq.~(\ref{#1})}

\newcommand{\cref}[1]{Condition~(\ref{#1})}

\include{macros}
 
\def\x{{\mathbf x}} 
 
\def\y{{\mathbf y}} 
\def\X{{\mathbf X}}

\def\s{{\mathbf s}} 
\def\h{{\mathbf h}}

\def\x{{\mathbf x}} 
 
\def\y{{\mathbf y}} 
\def\s{{\mathbf s}} 
\def\h{{\mathbf h}}

\usepackage{amsfonts,amssymb,bm}
\usepackage{pifont} %
\usepackage{graphicx,subfigure,epsfig,fancybox} %
\usepackage{float}
\usepackage{color} %
\usepackage{multirow}
\usepackage{natbib}

\newcommand{\revise}[1]{\textcolor{blue}{}}
\newcommand{\revised}[1]{\textcolor{blue}{}}
\renewcommand{\vec}[1]{\boldsymbol{#1}}

\newcommand{\vw}[0]{\vec{w}}

\newcommand{\vf}[0]{\vec{f}}
\newcommand{\vg}[0]{\vec{g}}
\newcommand{\vx}[0]{\vec{x}}
\newcommand{\vz}[0]{\vec{z}}

\newcommand{\vph}[0]{\vec{\phi}}

\newcommand{\vr}[0]{\vec{r}}
\newcommand{\vbeta}[0]{\vec{\beta}}

\newtheorem{lemma}{Lemma}
\newtheorem{proof}{Proof}
\newcommand{\task}{\mathcal{T}}
\newcommand{\likeli}{\mathcal{L} ({\task})}

\newcommand{\modelnop}{\vf}
\newcommand{\model}{\vf_{\goal_{\task}}}

\newcommand{\loss}{\mathcal{L}oss}

\newcommand{\goal}{\vec{\theta}}
\newcommand{\datatr}{\vec{D}^{tr}_{\task}}
\newcommand{\datats}{\vec{D}^{te}_{\task}}
\newcommand{\biggoal}{\vec{\Theta}_{\task}}
\newcommand{\taskZ}{\vec{Z}_{\task}}
\newcommand{\datayitr}{\vec{Y}_{\task}^{tr}}
\newcommand{\dataxitr}{\vec{X}_{\task}^{tr}}
\newcommand{\datayite}{\vec{Y}_{\task}^{te}}
\newcommand{\dataxite}{\vec{X}_{\task}^{te}}
\newcommand{\goali}{\goal_{\task}}

\usepackage{adjustbox}
\usepackage{array}
\usepackage{booktabs}

\newcommand{\method}[0]{\texttt{ST-MAML} }

\setlength{\abovecaptionskip}{0pt plus 0pt minus 1pt}
\setlength{\belowcaptionskip}{0pt plus 0pt minus 1pt}

\usepackage{enumitem}
\setlist[itemize]{leftmargin=*}

\usepackage{newfloat}
\usepackage{listings}
\lstset{%
	basicstyle={\footnotesize\ttfamily},%
	numbers=left,numberstyle=\footnotesize,xleftmargin=2em,%
	aboveskip=0pt,belowskip=0pt,%
	showstringspaces=false,tabsize=2,breaklines=true}
\floatstyle{ruled}
\newfloat{listing}{tb}{lst}{}
\floatname{listing}{Listing}
\pdfinfo{
/Title (AAAI Press Formatting Instructions for Authors Using LaTeX -- A Guide)
/Author (AAAI Press Staff, Pater Patel Schneider, Sunil Issar, J. Scott Penberthy, George Ferguson, Hans Guesgen, Francisco Cruz, Marc Pujol-Gonzalez)
/TemplateVersion (2022.1)
}

\setcounter{secnumdepth}{2} %

\title{\method : A Stochastic-Task based Method for Task-Heterogeneous Meta-Learning}
\author{
    Zhe Wang, Jake Grigsby, Arshdeep Sekhon, Yanjun Qi
}
\affiliations{

    University of Virginia\\
    \{zw6sg, jcg6dn, as5cu, yq2h\}@virginia.edu

}

\begin{document}

\maketitle

\begin{abstract}
\label{abs}

Optimization-based meta-learning typically assumes tasks are sampled from a single distribution -- an assumption oversimplifies and limits the diversity of tasks that meta-learning can model. 
Handling tasks from multiple different distributions is challenging for meta-learning due to a so-called task ambiguity issue. This paper proposes a novel method, \texttt{ST-MAML}, that empowers model-agnostic meta-learning (\texttt{MAML}) to learn from multiple task distributions. \method encodes tasks using a stochastic neural network module, that summarizes every task with a stochastic representation. The proposed Stochastic Task (\texttt{ST}) strategy allows a meta-model to get tailored for the current task and enables us to learn a distribution of solutions for an ambiguous task. \method also propagates the task representation to revise the encoding of input variables. Empirically, we demonstrate that \method matches or outperforms the state-of-the-art on two few-shot image classification tasks, one curve regression benchmark, one image completion problem, and a real-world temperature prediction application. 
To the best of authors' knowledge, this is the first time optimization-based meta-learning method being applied on a large-scale real-world task.%

\end{abstract}

\section{Introduction}
\label{intro}

Meta-learning aims to train a model on multiple machine learning tasks to adapt to a new task with only a few training samples. Optimization-based meta-learning  like model-agnostic meta-learning (MAML) facilitate such a goal by involving the optimization process. For example, MAML trains a global initialization of model parameters that are close to the optimal parameter values of every task ~\cite{finn2017model}.  Recent methods expand MAML's "global initialization" to a notion of "globally shared knowledge," including not only  initialization~\cite{finn2017model, li2017meta, rajeswaran2019meta} but also update rules~\cite{andrychowicz2016learning,  Ravi2017OptimizationAA}. The globally shared knowledge
are explicitly trained and allow these methods to produce good generalization performance on new tasks with a small number of training samples.

Most optimization-based meta-learning algorithms assume all tasks are identically and independently sampled from a single  distribution~\cite{andrychowicz2016learning, finn2017model, li2017meta, Ravi2017OptimizationAA, rusu2018meta}. This setup is  known as task homogeneity. We name meta-learning's target task distribution as ``meta-distribution".  Real-world tasks, however, may come from multiple meta-distributions. For instance, autonomous driving agents need to be able to handle multiple learning environments, including those under different lighting, various weather situations, and a diverse set of road shapes. This more challenging setup, we call task heterogeneity, posts technical challenges to strategies like MAML ~\cite{vuorio2019multimodal}.

For task heterogeneity setup, a naive and widely accepted meta-learning solution first learns a globally shared initialization across all meta-distributions and then tailors the model parameter to the current task~\cite{vuorio2019multimodal, yao2020automated, yao2019hierarchically, lee2018gradient, oreshkin2018tadam}. The tailoring step needs to rely on the task-specific information or, ideally, the identity information of the task. 
It, therefore, requires the meta-learner to infer the potential identity of a new task from a limited number of annotated samples~\cite{finn2018probabilistic}. This requirement raises severe uncertainty issues -- a challenge known as "task ambiguity." Figure \ref{fig:abs_demostration} provides a concrete example of "task ambiguity" that attributes to not only the limited annotated data but also from the multiple distributions that a task may come.  Surprisingly, recent optimization-based meta-learning literature pay little attention to the task ambiguity challenge ~\cite{vuorio2019multimodal, yao2020automated, yao2019hierarchically, lee2018gradient}.

\begin{figure*}[th]
  \centering
  \includegraphics[width=0.8\textwidth]{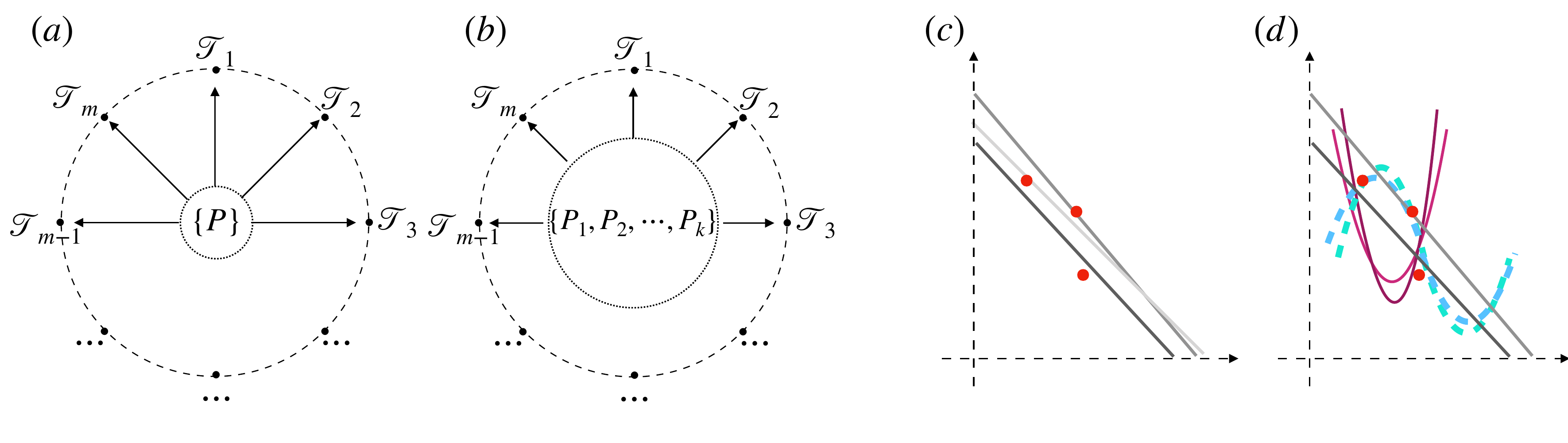}
  \caption{
    	\small
	Two critical challenges in meta-learning. 
	(a, b): The figures show the difference between task homogeneity and task heterogeneity in meta-learning. The solid line with arrow represents the uniformly random sampling from meta distributions (inner circle).
	(c, d): The figures demonstrate the task ambiguity in meta-learning. In heterogeneous setup, the task ambiguity is more critical due to the distributional uncertainty. The red dots represent the available training data, the dashed and solid curves are potential explanations of the data (better read in color).
	    \label{fig:abs_demostration}
	}
\end{figure*}

This paper proposes a novel meta-learning method \method for task heterogeneity challenge and centers our design on solving the task ambiguity issue. Our approach extends MAML by modeling tasks as a stochastic variable that we name as stochastic task. Stochastic task  allows us to learn  a distribution of models to capture the uncertainty of an ambiguous new task. We use variational inference as solver and the whole learning process does not require knowing the cardinality of meta-distributions. We apply the \method on multiple applications, including image completion, few-shots image classification, and temporal forecasting meta-learning problems. To the best of authors' knowledge, this is the first time optimization-based meta-learning being applied on a large-scale real-life task. Our empirical results demonstrate that \method outperforms the MAML baselines with $40\%$ on that task.

\section{Methods}

\begin{figure*}[ttt!]

\begin{tabular}{cc}
\begin{minipage}{.5\textwidth}
    
    \centering
    \includegraphics[width=\textwidth]{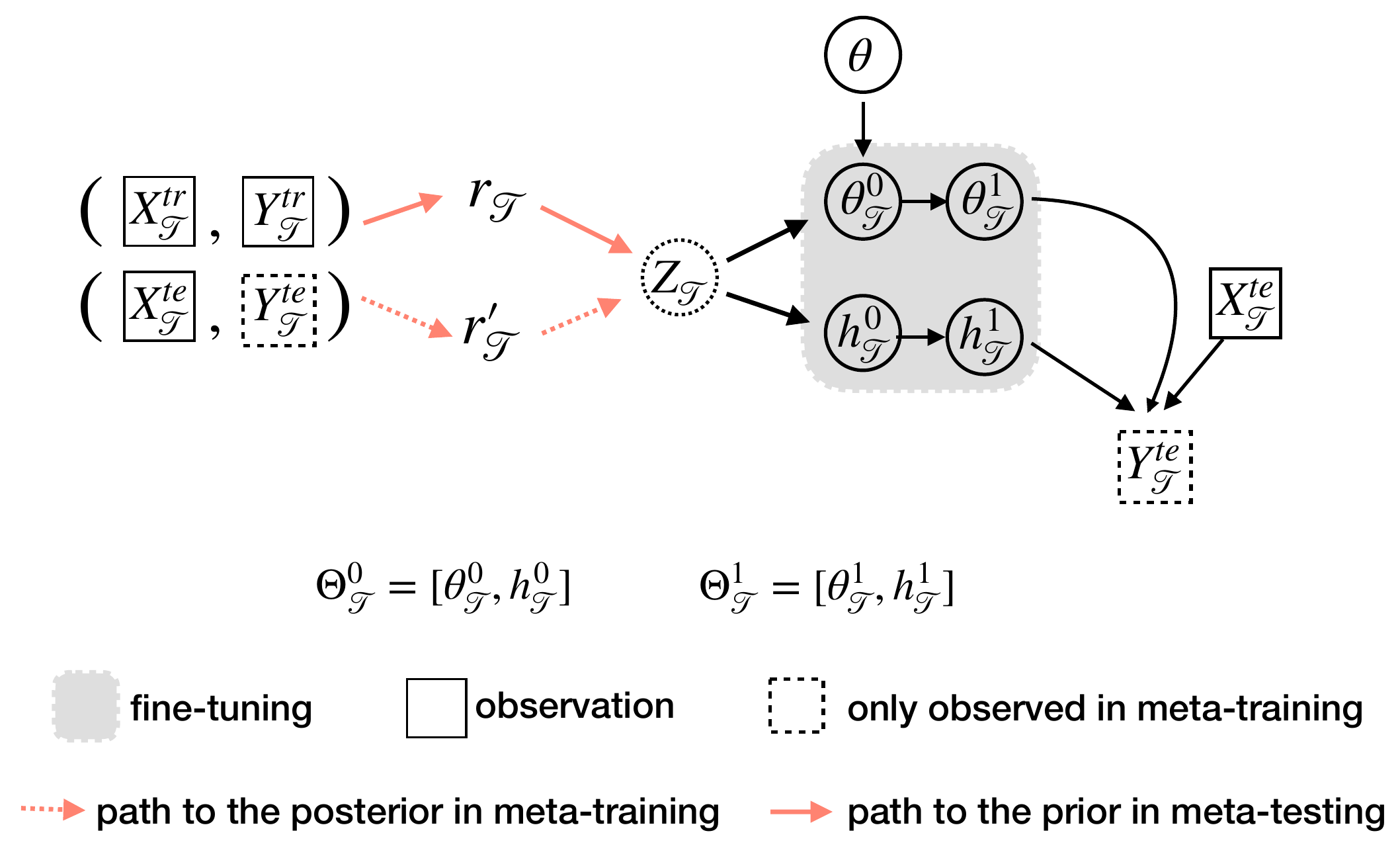}
    \captionof{figure}{
        Probabilistic model overview of \method. 
    } \label{fig:pgraph}

\end{minipage} &

\hfill

\scalebox{0.72}{
    \begin{minipage}{.66\textwidth}
    \begin{algorithm}[H]
    \captionof{algorithm}{\textsc{\method Meta-Training Procedure.}} \label{alg:train}
        \begin{algorithmic}[1]
        \STATE {\bfseries Input:} Meta-distributions $\{P_1(\mathcal{T}), \cdots, P_k(\mathcal{T})\} $, Hyper-parameters $\gamma_1$ and $\gamma_2$.
    	\STATE Randomly initialize model parameter $\goal$, stochastic task module parameters $\vec{\phi}$, tailoring module parameters $\vec{w}$, input encoding parameters $\vbeta$.
    		\WHILE{not DONE}
        		\STATE Sample batches of $m$ tasks $\{\mathcal{T}\}$ from meta-distributions.
        		\FOR{every task $\mathcal{T}$}
        		    \STATE Infer the posterior distribution of stochastic task variable $q(\taskZ|\task)$ and sample $\vz_{\task}\sim q(\taskZ|\task)$. [eq.\eqref{eq:Rtask} and eq.\eqref{eq:postz}]
		    \STATE Tailor $\goal$  with sample $\vz_{\task}$ to get task-specific initialization $\goali^0$. [eq.\eqref{eq:trans1}]
		    \STATE Revise the encoding of input variable by augmenting the raw input. [eq.\eqref{eq:trans2}]
        		    \STATE Evaluate the inner loss $\mathcal{L}_{in}(\task)$ on training set $\datatr$.  [eq.\eqref{eq:inner_loss}]
        		    \STATE Compute adapted parameter and augmented feature with gradient descent [eq.\eqref{eq:inner_update}]: \\ $\goali^1 = \goali^0 - \gamma_1 \nabla_{\goali^0}\mathcal{L}_{in}(\task)$,\ 
        		    $ \h_{\task}^{1} = \h_{\task}^{0} - \gamma_1\nabla_{\h_{\task}^0}\mathcal{L}_{in}(\task)$. 
        		\ENDFOR
        		\STATE Update $\goal, \vec{\phi}, \vec{w}, \vec{\beta}$ with $\gamma_2\dfrac{1}{m} \nabla_{[\goal, \vec{\phi}, \vec{w}, \vec{\beta}]}\sum_\task \mathcal{L}_{ELBO}(\task)$. [eq.\eqref{eq:LIB}]
        		
    		\ENDWHILE  
    \end{algorithmic}

    \end{algorithm}
    
    \end{minipage}
}

\end{tabular}

\end{figure*}

\subsection{Preliminaries on Meta Learning}

We describe a supervised learning task in meta-learning as 
\begin{align}
\task & = \{ \loss(), \model, \datatr, \datats \} \nonumber \\
& = \{ \loss(), \model, [\dataxitr, \datayitr], [\dataxite, \datayite]\},
\end{align}
Here $\loss()$, which takes as input model $\model$ and dataset,  describes the loss function that measures the quality of learner $\model$, whose parameter weight is $\goali$. Every task includes an annotated training set $\datatr = [\dataxitr, \datayitr]$ and a test set $\datats =[\dataxite, \datayite]$.  During meta-training, the test set $\datats$ is fully observed, but during meta-testing only its input $\dataxite$ is available. $\datatr$ and $\datats$ are sampled from $ \mathcal{\vec{X}}\times \mathcal{\vec{Y}}$, $\mathcal{\vec{X}}$ describes the input space and $\mathcal{\vec{Y}}$ is the output space.

The goal of meta learning is that on every task, the learner machine $\model$ needs to perform well on $\datats$ after fine-tuning on this task's training set $\datatr$. MAML \cite{finn2017model} achieves such a goal by learning a globally shared weight initialization  $\goal^*$ that is close to the optimal weight parameter of every task. We can write its training objective for getting the best initialization $\goal^*$ as:
\begin{align}
& \min_{\goal} \mathop{\mathbf{E}} \limits_{\task \sim P(\task) } [ \loss ( \vf_{\goali^{1}}, \datats) ] , \nonumber  \\
& \text{where\quad }
 \goali^{1} = \goali^{0} - \alpha \nabla_{\goal} [ \loss  ( \vf_{\goali^{0}},  \datatr)],\nonumber \\
& \text{and\quad }  \goali^0 = \goal.
\label{eq:maml_obj}
\end{align}

MAML samples a set of tasks $\{\task \}$ from the meta distribution $P(\task)$ and initialize each task's weight $\goali^{0}$ from the global knowledge $\goal$ (to be learnt): i.e., setting $\goali^{0} = \goal$. On each task, the learner performs gradient descent on its training set $\datatr$ to reach task-specific fine-tuned parameters $\goali^1$. The test set $\datats$ of task $\task$ is used for evaluating the current parameter $\goali^1$, and the evaluation will be used as the objective to optimize for learning the best global knowledge $\goal$.

The above objective (in \eref{eq:maml_obj}) can be equivalently framed as maximizing the likelihood $\likeli$:
\begin{align}
&\max_{\goal} \mathop{\mathbf{E}} \limits_{\task \sim P(\task) } [ \likeli ]   =   \prod\limits_{\task\sim P(\task)}p(\datayite|\dataxite, \datatr, \goal) \label{eq:likeli} \\ &= \prod\limits_{\task\sim P(\task)}\sum_{\goali^1}p(\datayite | \dataxite, \goali^1)p(\goali^1 | \datatr, \goal),
\end{align}
where $p(\goali^1 | \datatr, \goal)$ is a Dirac distribution derived by minimizing the negative log-likelihood(NLL) on $\datatr$ with gradient descent.

\subsection{Previous Heterogeneous Meta Learning}

Task-homogeneous meta-learning assumes that there exists one meta-distribution $P(\task)$ and all tasks are identically and independently (i.i.d.) sampled from $P(\task)$. Differently, in a task-heterogeneous setup, there exist multiple meta-distributions $\task \sim \{P_1(\task), P_2(\task), \cdots, P_k(\task)\}$. Figure \ref{fig:abs_demostration} (a,b) compare two described meta-learning setups. 

We can naively use MAML and assign all tasks with the same global initialization (though they come from different distributions). 
Figure \ref{fig:abs_demostration}(c, d) show that the "task ambiguity" issue is more critical in task-heterogeneous setup and will hinder the generalization from MAML initialization since multiple very different task distributions exist.

A handful of previous works learn a customized initialization that was tailored from global initialization, in order to tackle the task heterogeneity challenge. MMAML~\cite{vuorio2019multimodal} learns a deterministic task embedding with an RNN module. %
HSML~\cite{yao2019hierarchically} manually designs a task clustering algorithm to assign tasks to different clusters, then customizes the global initialization to each cluster.  ARML~\cite{yao2020automated} models global knowledge and task-specific knowledge as graphs; the interaction between tasks is modeled by message passing.

Surprisingly, none of the recent works consider the task ambiguity issue. Most frameworks are still based on the assumption that  only one distribution exists to explain a task's observed training set (e.g., a new task should be assigned to only one cluster in HSML).  The potential identities of a task can be highly uncertain under the limited annotated data scenario. Figure \ref{fig:abs_demostration}(d) shows that the explanation of the observation can be various in task-heterogeneous setup and  we should not expect to obtain a unique predictor.

\subsection{Stochastic Variable $Z_{\task}$ to Encode Task}

When facing the task-heterogeneous setup, we hypothesize that a meta-learner that can encode potential tasks' patterns will alleviate the task ambiguity issue (to some degrees). These patterns could describe valuable information about tasks like the more possible shapes of curves for a regression meta-application. Moreover, we propose to enable task encoding with uncertainty estimates. This is because learning a task representation from its limited annotated data is challenging and such uncertainty measures can help inform the downstream meta-adaptation to new tasks (see Figure~\ref{fig:abs_demostration}(d)).

This hypothesis motivates us to describe a task $\task$ with a stochastic variable $\taskZ$ and model its distribution to condition on observations. With adding this latent variable, we can rewrite the per task likelihood $\likeli$ in \eref{eq:likeli} as:  
\begin{align}
\likeli & = \sum_{\taskZ} p(\datayite|\dataxite, \datatr, \taskZ, \goal) p(\taskZ |\datatr).
\label{eq:likeliz}
\end{align}
We assume in the second term from above, $\taskZ$ only conditions on  $\datatr$. Figure~\ref{fig:pgraph} shows our design.

In later \sref{sec:update}, we show that due to the intractable likelihood as defined above, we choose to maximize its evidence lower bound (a.k.a ELBO) instead. Optimizing this variational objective requires the prior $p(\taskZ|\datatr)$ and the posterior $ q(\taskZ| \task)$. We model the prior $p(\taskZ|\datatr)$ as a Gaussian distribution, whose mean and variance are outputs from a two-layer multi-layer perceptron (MLP) module with input vector $\vr_\task$:
\begin{equation}
     p(\taskZ|\datatr) = \mathcal{N}(\vec{\mu}(\vr_{\task}), \vec{\sigma}(\vr_{\task})). \label{eq:zmu}
\end{equation}
Here vector $\vr_\task$ is a vector summarizing the encoding of a task $\task$. 
We propose a neural network module to learn  $\vr_\task$ from the sample observations $\datatr$.  The training observations of task $\task$ consist of unordered annotated data pairs $[(\x_{\task}^{tr}, \y_{\task}^{tr})]$. Permutation invariant is a desirable property for functions acting on sets.
As recommended by deep sets \cite{zaheer2017deep}, the authors proved any function acting on sets ${S}$ is permutation invariant if and only if it can be decomposed as $\rho(\sum_{\s\in S}\phi(\s))$ for suitable choice of transformations $\rho, \phi$. We follow such a design, and encode a task by encoding every pair of its observation in $\datatr$ through a neural network layer:
\begin{align}
&    \vr_{\task, j}  = \vg^{Enc}_{\vph}(\x_{\task, j}^{tr}, \y_{\task, j}^{tr}),\quad j = 1, \cdots, |\datatr|, \label{eq:encode} \\
&    \vr_\task  = \frac{1}{|\datatr|}\sum_{j=1}^{|\datatr|}\vr_{\task, j}.
    \label{eq:Rtask}
\end{align}
\eref{eq:Rtask} uses average function as aggregation operator to obtain the task embedding because it is able to remove the inductive bias due to different sizes of training set from $\vr_{\task}$. In \eref{eq:encode}, $\vg^{Enc}_{\vph}()$ is implemented as a MLP module with learnable parameter $\vph$.

We then approximate the intractable posterior distribution $q(\taskZ| \task)$ of $\taskZ$ as conditioned on the whole $\{\datatr, \datats\}$ (see  \sref{app:post_z}):
\begin{align}
     & q(\taskZ| \task) = q(\taskZ|\datatr, \datats) = \mathcal{N}(\vec{\mu}(\vr'_\task), \vec{\sigma}(\vr'_\task) ), \\  
     & \vr'_\task = \frac{1}{|\task|}\sum_{j=1}^{|\task|}\vr_{\task, j},\quad j = 1, \cdots, |\datatr| + |\datats|,
     \label{eq:postz}
\end{align}
where $|\task| = |\datatr| + |\datats|$ , $\vec{\mu}(\cdot)$ and $\vec{\sigma}(\cdot)$ are the same MLP modules we have in \eref{eq:zmu}.

\subsection{\method: Customizing Knowledge with $Z_{\task}$}

Now with the summary task representation $\taskZ$, we propose to use it to revise MAML into \method for heterogeneous meta-learning setup. We propose to tailor the global initialization $\goal$ to task-specific initialization $\goali^0$ for a task $\task$.

There exist many potential ways to use $\taskZ$ to tailor the global initialization $\goal$ to task-specific initialization $\goali^0$. We choose the following design. We assume, our target learning machine composes with a base learner and a task learner, like neural network models: $$\model = \vf_{\goal_c} ( \vf_{\goal_b}).$$ We assume the base learner's parameter is $\goal_b$, and its task learner's parameter is $\goal_c$ (for instance, the last linear layer before softmax for classification case). We can then rewrite $\goal=[\goal_b, \goal_c]$. We propose to only customize $\goal_c$ with $\taskZ$:
\begin{equation}
    \goali^0= \vg^{Gate}_{\vw}(\goal, \taskZ) =  [\goal_b, \sigma(\vec{w_1z}_{\task} + \vw_0)\odot\goal_c ],
    \label{eq:trans1}
\end{equation}
Here $\vec{z}_{\task}$ is sampled from the distribution $q(\taskZ|\task)$ during meta-training and from $p(\taskZ|\datatr)$ during meta-testing. $\sigma$ is the sigmoid function,  $\odot$ represents the element-wise multiplication, $\vw=[\vw_1, \vw_0]^T$ are learnable parameters.

Moreover, we design additional customized knowledge for task $\task$. The basic intuition is that the final prediction of a meta-learner depends on both model parameters and input representations. To increase the capacity of the task-specific knowledge, we propose to further propagate task representation $\taskZ$ into encoding augmented feature representations we denote as $\h_{\task}$. We concatenate $\h_{\task}$ with a sample's input representation $\x_{\task}$, and feed the combined vector $\hat{\x}_{\task}$ to our learning machine as its new input. 
\begin{equation}
    \h_{\task}^0 = \vg^{In}_{\vbeta}(\taskZ) = \vec{\beta}_1\vz_{\task} + \vec{\beta}_0,\quad \hat{\x}_{\task} = [\x_{\task}, \h_{\task}^0].
    \label{eq:trans2}
\end{equation}
Same as \eref{eq:trans1}, $\vz_{\task}$ is sampled from its distribution, $\vbeta = [\vbeta_1, \vbeta_0]$ are learnable parameters.

Now when facing a new task $\task$, a meta-model will first generate the task-specific knowledge that includes both augmented feature $\h_{\task}$ and task-specific parameter $\goali$. We denote the combined knowledge set for task $\task$ as: 
\begin{equation}
\biggoal=[\goali, \h_{\task}]. \label{eq:biggoal}
\end{equation}
This is the meta-knowledge we need to learn in \method. We note its initial values as $\biggoal^0 =[\goali^0, \h_{\task}^0]$ and fine-tuned values as $\biggoal^1=[\goali^1, \h_{\task}^1]$.

Aiming to learn the meta knowledge defined in \eref{eq:biggoal}, now we can write our objective (task likelihood) in \eref{eq:likeliz} into the following factorization: 
\begin{multline}
\begin{split}
\likeli  = \sum_{\biggoal^0, \biggoal^1, \taskZ} p(\datayite|\dataxite, \biggoal^1)p(\biggoal^1|\biggoal^0, \datatr) \\
p(\goali^0|\goal, \taskZ) p(\h_{\task}^0|\taskZ)p(\taskZ|\datatr).
\end{split}
\label{eq:pdist}
\end{multline}
This follows the Bayesian graph provided in Figure~\ref{fig:pgraph}.

\paragraph{Design Choices: } There exist many other possible probabilistic design besides Figure~\ref{fig:pgraph}. For instance, we can model every variable in the figure as a stochastic distribution and build a complicated hybrid framework. However, it will lead to excessive stochasticity and increase the potential of the underfitting issue especially in a limited data situation. Instead,
similar to $p(\biggoal^1|\biggoal^0, \datatr)$, we choose to model both $p(\h_{\task}^0|\taskZ)$ and $p(\goali^0|\goal, \taskZ)$ as deterministic (see \eref{eq:trans1} and \eref{eq:trans2}) that allow us to employ an amortized variational inference technique~\cite{ravi2019amortized}. %

Our design is different from recent probabilistic extensions of MAML~\cite{finn2018probabilistic, yoon2018bayesian}. They conduct inference on model parameters $\goali$ (initial value $\goali^0$ or fine-tuned value $\goali^1$). Our \method shifts the burden of variational inference to the task representation $\taskZ$, whose dimension is of multiple orders smaller than the size of model parameters.

\subsection{\method: Update Rules}
\label{sec:update}

\begin{figure*}[t]
  \centering
  \includegraphics[width=0.95\textwidth]{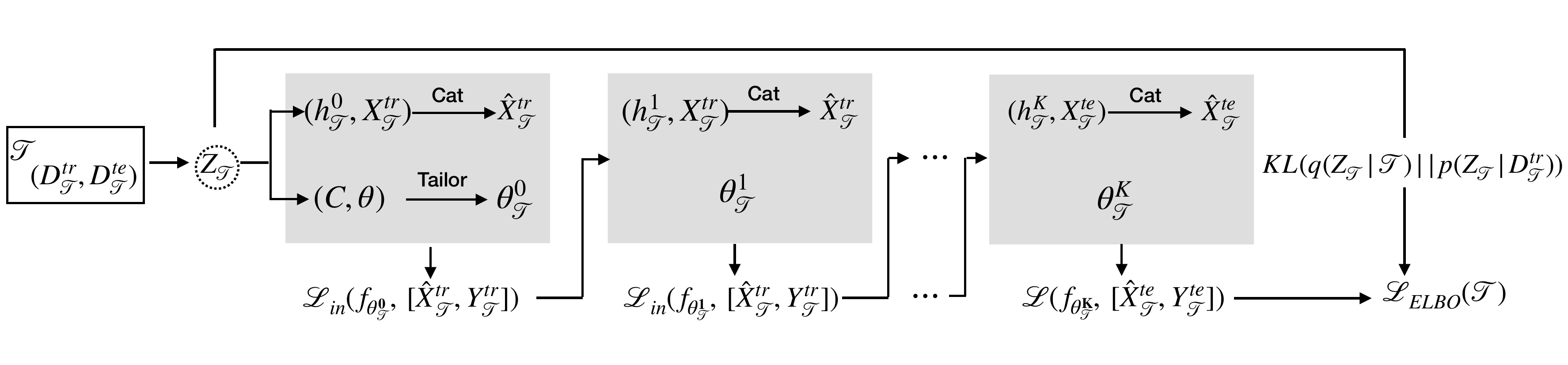}
  \caption{Iterative optimization process.  \label{fig:opt} In the inner loop, Starting from task-specific parameter initialization $\goali^0$ and augmented features $\h_{\task}^0$, their fine-tuned values $\goali^K, \h_\task^K$ are inferred by performing gradient descent on the training set $\datatr$ for $K$ iterations. }

\end{figure*}

\paragraph{Variational Objective:} To optimize the intractable likelihood as defined in \eref{eq:pdist}, we choose to maximize its evidence lower bound (a.k.a ELBO) instead:
 \begin{multline}
    \mathcal{L}_{ELBO}(\task) = \mathbf{E}_{\biggoal^1\sim q(\biggoal^1|\task)} \log p(\datayite|\dataxite, \biggoal^1)
    \\ - KL(q(\taskZ|\task)||p(\taskZ|\datatr).
\label{eq:LIB}
\end{multline}
During meta-training, we sample $m$ tasks and optimize the empirical average $\dfrac{1}{m}\sum\limits_{t=1}^m \mathcal{L}_{ELBO}(\task_t)$.

\paragraph{Update Rules:} Same as MAML, the optimization of the \method contains two loops: the inner loop and the outer loop. Figure~\ref{fig:opt} shows the iterative optimization process. In the inner loop, for the $j_{th}$ training data, we concatenate $\x_{\task, j}^{tr}$ with augmented feature $\h_{\task}^0$ to get augmented input vector $\hat{\vx}_{\task, j}^{tr}$. We
feed $\hat{\vx}_{\task, j}^{tr}$ into the learning machine $\modelnop$ whose parameter is $\goali^0$ to calculate the inner loss:
\begin{equation}
    \mathcal{L}_{in}(\task) = \frac{1}{|\datatr|}\sum_{j=1}^{|\datatr|}\mathcal{L}(\vf_{\goali^0}, [\hat{\x}_{\task, j}^{tr}, \y_{\task, j}^{tr}]).
    \label{eq:inner_loss}
\end{equation}

The inner loss is then used for updating $\goali^0$ and $h_{\task}^0$:
\begin{equation}
    \h_{\task}^{1} = \h_{\task}^{0} - \frac{\partial \mathcal{L}_{in}(\task)}{\partial \h_{\task}^{0}},\quad  \goali^{1} = \goali^{0} - \frac{\partial \mathcal{L}_{in}(\task)}{\partial \goali^{0}}.
    \label{eq:inner_update}
\end{equation}

Figure~\ref{fig:opt} shows we can optimize the inner loss for $K$ iterations to achieve a closer approximation for optimal values in \eref{eq:inner_loss}. In the outer loop, we maximize the approximated ELBO $\mathcal{L}_{ELBO}$ in \eref{eq:LIB} using a batch of $m$ tasks. The amortized variational technique allows us to conduct the sampling from $q(\biggoal^1|\task)$ by first sampling from $q(\taskZ|\task)$ and then apply deterministic transformation using \eref{eq:trans1} and \eref{eq:trans2}.

\paragraph{Algorithm of \method: }
We described the procedure of \method in the form of pseudo code as shown in Algorithm~\ref{alg:train}. Note, parameters of neural functions $\vec{\mu}(\cdot)$, $\vec{\sigma}(\cdot)$, $\vg^{Enc}_{\vph}()$, $\vg^{Gate}_{\vw}()$, and $\vg^{In}_{\vbeta}()$ are updated in the outer loop.

\paragraph{Theoretically Analysis of \method: } We also provide the second interpretation of our objective from information bottleneck perspective and prove they lead to exactly the same target. See \sref{app:derivation} for detailed proofs.

\subsection{Connecting to Related Work}
\label{related_work}

Optimization-based meta-learning methods facilitate the model's adaption to new tasks through global knowledge learned by the optimization process. Meta-LSTM~\cite{Ravi2017OptimizationAA} meta-learns the update rule with an RNN meta-learner. MAML~\cite{finn2017model} trains a global initialization close to the optimal value of every task. Leveraging diverse meta-knowledge further accelerates the learning process. In Meta-SGD~\cite{li2017meta}, the meta-knowledge consists of both initialization and learning rate. ALFA~\cite{baik2020meta} proposes to meta-learn both initialization and hyperparameter update module. 
Most methods assign the same global knowledge to every task that leads to sub-optimal solutions for heterogeneous settings. Besides, they are all deterministic and can only learn one solution for a new task.

Bayesian approaches are a long-standing discipline that incorporates uncertainty in modeling. Multiple recent works extend MAML into the Bayesian framework and recast meta-learning as the probabilistic framework~\cite{finn2018probabilistic, grant2018recasting, yoon2018bayesian, ravi2019amortized, garnelo2018neural}. PLATIPUS \cite{finn2018probabilistic} builds upon amortized variational inference and injects Gaussian noise into the gradient during the meta-testing time to learn a distribution over model parameters. LLAMA\cite{grant2018recasting} applies Laplace approximation for modeling the parameter distribution, but it requires the approximation of a high dimensional covariance matrix. These methods view model parameters (i.e. network weights and bias) as random variables and perform inference on them. It leads to significant challenges when working with complicated models and high-dimensional data. 

Our work also loosely connects to the "prototype meta-learning" ~\cite{triantafillou2019meta, snell2017prototypical}. These studies learn a prototype for every class we need to predict and the final prediction depends on the distances between instances and prototypes.  Amortized bayesian prototype meta-learning~ \cite{sun2021amortized} assumes a distribution over class prototypes. This design requires prior knowledge about the classes of tasks and only applies to the classification homogeneous-meta setup.

Another line of Bayesian meta-learning studies~\cite{garnelo2018neural, wang2020doubly,louizos2019functional, kim2018attentive} belongs to the neural approximators of the stochastic process family. They learn a prior for every task or further use a hierarchical model that learns the instance prior. However, these methods don't share knowledge across tasks. Table~\ref{table:model_comparison} compares related lines of works with ours.

\begin{table}[h]
\caption{A summary of datasets, tasks  and their properties.}
\small
\centering
\resizebox{\columnwidth}{!}{%
\begin{tabular}{c|c|c| p{15mm}}
\toprule
Problems & Tasks & Heterogeneity & Ambiguity \\\midrule
\multirow{3}{*}{\shortstack{Regression}} & 
 2D regression & $+$ & $ 10 \to 40$  \\
 &Weather prediction & $++$ &  $10 \to 100$\\
&Image completion &  $+$ & $40 \to 784$
\\\cmidrule{1-4}
\multirow{3}{*}{\shortstack{Classification}}& PlainMulti classification & $+$ &  5way 5shot\\
& CelebA binary classification   &  & \multirow{2}*{2way 5shot} \\
& (see \sref{app:exp_binary}) & & \\
\bottomrule
\end{tabular}
}
\label{table:exp_summary}
\end{table}

\begin{table*}[h!]
\centering
\caption{Regression accuracy on 2D regression tasks.}
\begin{tabular}{c|cccccc}
\hline
Model & MAML       & MetaSGD   & BMAML      & MMAML      & ARML       & \method  \\ \hline
MSE   & $2.29\pm 0.16$ & $2.91\pm 0.23$ & $1.65\pm 0.10$ & $0.52\pm 0.04$ & $0.44\pm 0.03$ &  $\mathbf{0.37\pm 0.04}$      \\ \hline
\end{tabular}
\label{table:2Dregression}
\end{table*}

\begin{table*}[h!]
\caption{10-Shot temperature prediction.}
\centering
\begin{tabular}{c|ccccc}
\hline
Model & MAML   & MetaSGD & \method  & \method(w/o aug) & \method(w/o tailor) \\ \hline
MSE   & $141.43 \pm 9.33\%$ & $291.42 \pm 14.89\%$ &  $\mathbf{86.56 \pm 4.89\%}$  &   $100.27 \pm 5.87\%$ & $106.37 \pm 5.77\% $    \\\hline
\end{tabular}
\label{table:weather_reg}
\end{table*}

\begin{figure*}[h!]
    \centering
    \makebox[\textwidth][c]{\includegraphics[width=0.95\textwidth]{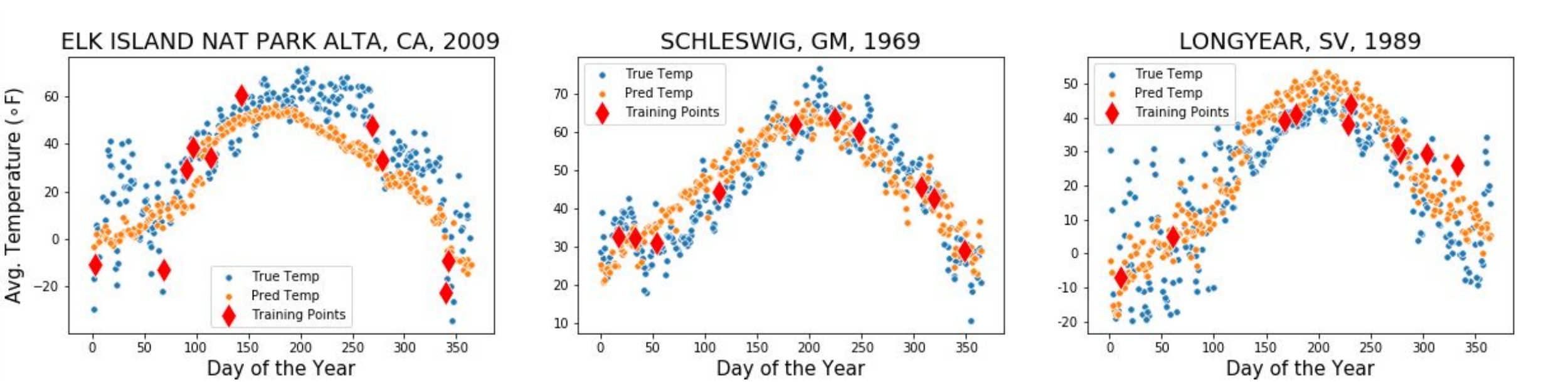}}
    \caption{A visualization of trained \method on the NOAA-GSOD temperature prediction task. The model is given $10$ training points (red) and predicts the remaining days of the year (orange). The true temperatures are shown in blue.}
    \label{fig:temp_preds}
\end{figure*}

\section{Experiments}

We apply \method to both few-shot regression and classification to demonstrate its effectiveness on both heterogeneous and ambiguous tasks. In regression, we evaluate \method in a variety of domains including 2D curve fitting, whose tasks show both heterogeneity and ambiguity, and two real-world tasks including image completion and weather prediction. We also study two few-shot classification problems. Tasks from the Plain-Multi dataset are heterogeneous while CelebA classification uses ambiguous decision rules.   
A summary of experiment design, datasets, and their properties is shown in Table~\ref{table:exp_summary}. The number of $+$ represents the significance level of the challenge.

\subsection{2D Regression}
\textbf{Setup.} For 2D regression, we follow the similar setting as~\cite{yao2020automated}, where $|P(\task)|=6$. The meta distribution $P(\task)$ consists of 6 function families including \textit{sinusoids, straight line, quadratic, cubic, quadratic surface}, and \textit{ripple} functions. To increase ambiguity, we perturb the output by adding a Gaussian noise whose standard deviation is 0.3. During meta-training, every task is uniformly randomly sampled from one of them, the size of the training set $|\datatr|=10$. A detailed description of the setup and model architecture is available in appendix~(see \sref{app:exp_setup}).

\textbf{Baselines, results, and analysis.}  We have two types of baselines: (1) meta-learning methods designed for homogeneous tasks: MAML~\cite{finn2017model} and MetaSGD~\cite{li2017meta}. (2) Bayesian meta-learning method: Bayesian MAML~\cite{yoon2018bayesian}, which conducts inference on a large number of model parameters. (3) Meta-learning methods designed for heterogeneous tasks including MMAML\cite{vuorio2019multimodal} and ARML\cite{yao2020automated}. We train our model on around $10,000$ tasks and evaluate it on over $1,000$ new sampled tasks. The results are summarized in Table~\ref{table:2Dregression}. We showcase fitting curves in appendix~(see Figure~\ref{fig:2DReg_10shot}). Even though we fix the size of the training set and noise level for every task during meta-training, during meta-testing, they are flexible and can be changed. To increase the ambiguity for every test task, we vary the number of available annotated data in the training set and noise level. More analysis visualization results can be found in appendix~\ref{app:exp_setup}. 

As in Figure~\ref{fig:2DReg_10shot}, all sampled solutions will be close to the groundtruth if tasks are less uncertain. On the other hand, the figures in appendix~\ref{app:exp_setup}  show that as tasks become more ambiguous, due to fewer annotated training data or larger noise, the sampled solutions tend to span wider space. 

\subsection{Temperature Prediction}
\textbf{Setup.} Next, we evaluate the model in a challenging regression problem using real-world data. The NOAA Global Surface Summary of the Day (GSOD) dataset contains daily weather data from thousands of stations around the world. Each task is created by sampling data points from (station, year) pairs. The model takes in the current day of the year along with $15$ weather features such as wind speed, station elevation, precipitation, fog, air pressure, etc. It then learns to predict the average temperature in Fahrenheit on that day. We remove important information like the weather station number, name, latitude, and longitude. Hiding the station information in this way creates a highly heterogeneous problem where each station generates its own task distribution. The model sees $10$ days of labeled temperature data before predicting the temperature on $100$ test days. More technical details can be found in appendix \ref{app:exp_setup}.

\textbf{Results and analysis.} After $100$ epochs of training on approximately $42,000$ unique (station, year) tasks, we evaluate the model on a test set of $1,000$ (station, year) pairs. The results are summarized in Table \ref{table:weather_reg}. The MSE error of MAML is close to double that of \method. MetaSGD, designed for homogeneous meta-learning, achieves low accuracy because the globally learned learning rate will hurt the model's generalization ability on unseen tasks from different distributions. It is consistent with our assumption that incorporating task-specific knowledge into the model can help solve the task-heterogeneous challenge.

\begin{table*}[th!]
\caption{5-way 5-shot classification accuracy with 95\% confidence interval on Plain-Multi dataset.}
\small
\begin{center}
\begin{tabular}{l|l|cccc}
\toprule
Settings & Algorithms & Data: Bird & Data: Texture & Data: Aircraft & Data: Fungi \\\midrule
\multirow{7}{*}{\shortstack{5-way\\5-shot}} & 
 MAML & $68.52\pm0.79\%$ & $44.56\pm0.68\%$ & $66.18\pm 0.71\%$ & $51.85\pm0.85\%$ \\
&MetaSGD & $67.87\pm0.74\%$ & $45.49\pm0.68\%$ & $66.84\pm0.70\%$ & $52.51\pm0.81\%$ \\\cmidrule{2-6}
& BMAML & $69.01\pm 0.74\%$ & $46.06\pm 0.69\%$ & $65.74\pm 0.67\%$ & $52.43\pm 0.84\%$ \\
& MMAML & $70.49\pm0.76\%$ & $45.89\pm0.69\%$ & $67.31\pm0.68\%$ & $53.96\pm0.82\%$ \\
&  HSML & $\mathbf{71.68\pm 0.73\%}$ & $\mathbf{48.08\pm 0.69\%}$ & $\mathbf{73.49\pm 0.68\%}$ & $\mathbf{56.32\pm 0.80\%}$ \\\cmidrule{2-6}
& \method  & $\mathbf{72.49 \pm 0.53\%}$  & $46.51 \pm 0.42\%$ & $\mathbf{72.64 \pm 0.44\%}$ & $\mathbf{55.29 \pm 0.57\%}$ \\
& \method(w/o aug) & $71.49 \pm 0.55\%$  & $\mathbf{47.17 \pm 0.44\%}$ & $71.62 \pm 0.43\%$ & $54.91 \pm 0.56\%$ \\
& \method(w/o tailor) & $71.48 \pm 0.55\%$ & $46.07 \pm 0.40\%$ & $70.46 \pm 0.44\% $& $54.59 \pm 0.56\%$\\
\bottomrule
\end{tabular}
\end{center}
\label{tab:plainmulti_res}
\end{table*}

\subsection{Image Completion}
\textbf{Setup.} We also apply our method to image completion tasks. In image completion, the meta distribution $p(\task) = \{\text{MNSIT}, \text{FMNIST}, \text{KMNIST}\}$. Every task contains one image of size $28\times 28$ sampled randomly from one of three distributions. In meta-training, $40$ pixels are observed for every image, thus, $|\datatr|=40$. We use coordinates as inputs and pixel value as the target variable. Detailed architecture can be found in appendix~\ref{app:exp_setup}.
\begin{figure*}[t]
    \centering 
    \includegraphics[width=0.9\linewidth]{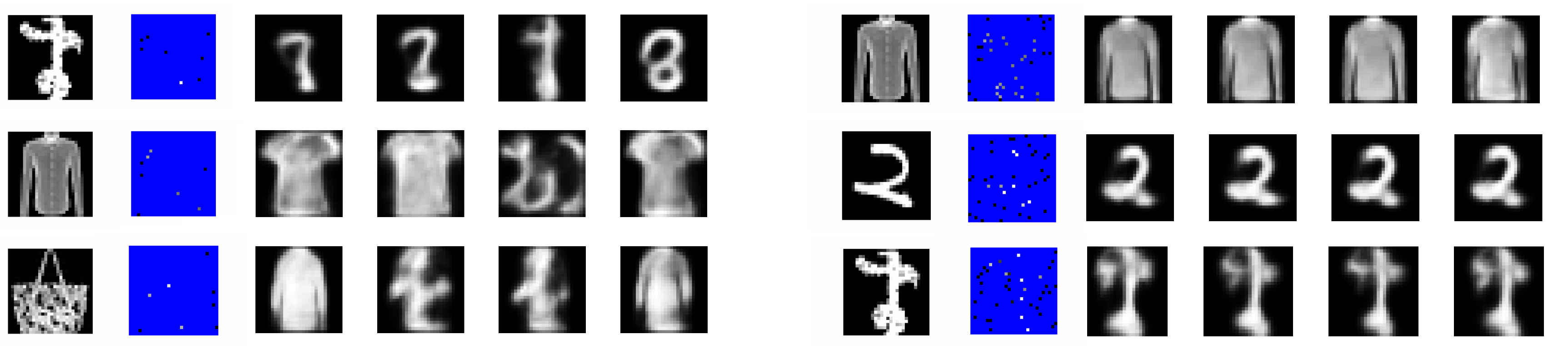}
\caption{Visualization of completed images. First column contains  original images, second column shows the observations which contains only $8$ annotated pixels(left) and 40 annotated pixels(right). The unobserved pixels have been coloured blue for better clarity. The remaining columns correspond to $4$ different samples given the context points. } 
\label{fig:ImgCompletion}
\end{figure*}

\textbf{Baselines, results and analysis.} Image completion with limited given pixels is a benchmark task for Neural processes~\cite{garnelo2018conditional,garnelo2018neural}. Thus, we compare our proposed \method with neural processes(NP)~\cite{garnelo2018neural} and conditional neural processes~\cite{garnelo2018conditional} which is viewed as deterministic neural processes. Similar to CNP, we also recast our model into the deterministic framework, where the task representation $\taskZ$ is modeled as a fixed-dimension vector learned from the training set $\datatr$ only. The numerical comparison is shown in Table~\ref{table:img_comp}. \method achieves higher completion precision compared with NP and CNP. We skipped the variance for all methods because the difference is insignificant and is close to $1\mathrm{e}{-5}$.

There is a large amount of ambiguity surrounding the completed images. Given limited observed pixels, multiple potential images are lying behind, especially for gray images. Uncertainty arises on three levels: inter-class level, inter-distribution level, and cross-distribution level. \method can increase the opportunity of capturing more potential truths by learning a distribution of possibilities rather than a unique mapping. We visualize observations and their completions in Figure~\ref{fig:ImgCompletion}. Our set operations allows us to learn from any size of the training set during meta-testing. Thus, as more pixels are observed during meta-testing, the task is less ambiguous. Therefore, the completed images from different models stay close to the original image. 
\begin{table}[h!]
 
    \centering
\caption{Image completion accuracy.   \label{table:img_comp}}
\resizebox{0.9\columnwidth}{!}{%
\begin{tabular}{c|cccc}
\hline
Model & NP    & CNP   & \method (deter) & \method  \\ \hline
BCE   & $0.302$ & $0.358  $ & $0.272 $           & $\mathbf{0.268}$    \\ \hline
\end{tabular}
}
\end{table}

\subsection{Heterogeneous Classification}
\textbf{Setup and baselines.} N-way K-shot classification is a popular setup in few-shot meta-learning~\cite{chen2019closer, ren2018meta, vinyals2016matching}. The training set of every task consists of $N$ classes with $K$ labeled data in each class. We apply our proposed \method on the benchmark heterogeneous meta-learning dataset: Plain-Multi, proposed in~\cite{yao2019hierarchically}. The meta-distributions consists of four datasets and every task is sampled uniformly randomly from one of them. Following the benchmark architecture, the feature learner contains four convolutional blocks. The input $\vec{x}$ is feed into two convolutional blocks with 6 channels, then the output is appended with the target variable and passed into a two-layer MLP module to model the mean and variance of $\taskZ$. We compare to MAML~\cite{finn2017model}, MetaSGD~\cite{li2017meta}, MMAML~\cite{vuorio2019multimodal}, HSML~\cite{yao2019hierarchically}, and probabilistic method BMAML~\cite{yoon2018bayesian}.

\textbf{Results and analysis.} Trained on over $50,000$ tasks, the model is evaluated on $1,000$ tasks for each dataset and the results are summarized in Table~\ref{tab:plainmulti_res}. The most relevant method is MMAML. It learns a deterministic task embedding with an RNN module and encodes all parameters in both base learner $\vf_{\goal_b}$ and task learner $\vf_{\goal_c}$. Our method outperforms it on every dataset. Also, the probabilistic framework enables us to achieve consistently low variance. HSML requires the prior knowledge about number of clusters, which plays an important role with respect to the final accuracy.

\subsection{Ablation Studies.} 
Facing a task, the initial state of the knowledge set includes both tailored initialization and augmented feature. To better investigate the contribution of each component, we perform ablation experiments on both temperature prediction and PlainMulti classification. The results are shown in both Table \ref{table:weather_reg} and Table \ref{tab:plainmulti_res}. Both two types of task-specific knowledge exhibit the performance improvement over the baselines, and they together give the best performance.

\section{Conclusion}
Task heterogeneity and task ambiguity are two critical challenges in meta-learning. Most meta-learning methods assign the same initialization to every task and fail to handle task heterogeneity. They also disregard the task ambiguity issue and learn one solution for every task. \method encodes tasks using NN-based stochastic task module plus set-based operation for permutation-invariance. This stochastic task design allows for customizing global knowledge with learned stochastic task distribution. We further convert latent task encodings to augmented features to improve the interaction between model parameters and input variables. The probabilistic framework allows us to learn a distribution of solutions for ambiguous tasks and recover more potential task identities. Empirically, we design extensive experiments on regression and classification problems and show that \method provides an efficient way to learn from diverse and ambiguous tasks. We leave the challenge to handle domain generalization during meta-testing to future work.

\FloatBarrier

\bibliography{files/meta-ref}

\appendix

\onecolumn
\section{Appendix}
\subsection{Model Comparison.}
\begin{table*}[h]
\caption{Model comparison table. HoMAMLs are MAMLs designed for task homogeneity, and HeMAMLs are for heterogeneity. NPs describe methods in Neural Processes family. PMAMLs mean probabilistic extensions of MAML. Aug feature represents the augmented features.}
\small
\begin{center}
\begin{tabular}{c|c|cccc}
\toprule
Category & Tasks & Knowledge Set & Tailoring & Sampling & Inference on \\\midrule
\multirow{2}{*}{\shortstack{HoMAMLs}} & 
 MAML~\cite{finn2017model} & Initialization & & &  \\
 &MetaSGD~\cite{li2017meta} & Initialization$+$lr & & & \\\cmidrule{1-6}
\multirow{2}{*}{\shortstack{HeMAMLs}}& MMAML~\cite{vuorio2019multimodal}& Initialization &\checkmark & & \\ 
&HSML~\cite{yao2019hierarchically} & Initialization &\checkmark & &  \\\cmidrule{1-6}
\multirow{2}{*}{\shortstack{NPs}} & 
 NP~\cite{garnelo2018neural} & Aug feature  & & \checkmark & Representation \\
 &CNP~\cite{garnelo2018conditional} & Aug feature & &   & \\\cmidrule{1-6}
 \multirow{3}{*}{\shortstack{PMAMLs}}& BMAML~\cite{yoon2018bayesian}& Initialization  &  &\checkmark & Parameters\\ 
&PLATIPUS~\cite{finn2018probabilistic} &Initialization & & \checkmark  & Parameters \\
&\method &Initialization$+$Aug feature & \checkmark & \checkmark & Representation\\

\bottomrule
\end{tabular}
\end{center}
\label{table:model_comparison}
\end{table*}

\subsection{Approximation for posterior distribution $q(Z_{\task})$.}
\label{app:post_z}
Given the training set $\datatr$ of a task $\task$, the stochastic task variable $\taskZ$ is supposed to infer its posterior distribution conditioned on $\datatr$ only, specifically, we have the true posterior:
\begin{equation}
p(\taskZ|\task) = \dfrac{p(\taskZ|\datatr) p(\datayite|\taskZ, \dataxite, \datatr)}{p(\task)}
\end{equation}
the empirical distribution $p(\task)$ is only known in the form of $\{(\datatr, \datats)\}$ pairs. Thus, the true posterior distribution is intractable. Based on our design, we suppose the prior distribution $p(\taskZ|\datatr)$ is a multivariate Gaussian distribution, whose mean and variance is the output of a set operator acting on $(\dataxitr, \datayitr)$ pairs. To ensure the posterior stays close to the prior, also the posterior is derived from $(\datatr, \datats)$, we approximate it with the output of the same set operator acting on both $(\dataxitr, \datayitr)$ and $(\dataxite, \datayite)$ pairs.

\subsection{Derivation of ELBO  approximation as Variational Information Bottleneck Objective}
\label{app:derivation}

For task $\task$, our fine-tuned task-specific knowledge set $\biggoal^1$ contains two variables: model parameters $\goali^1$ and augmented features $\h_{\task}^1$. Given task inputs $\X_{\task}=[\dataxitr, \dataxite]$, we are seeking a task-specific knowledge set that is maximally informative of test target $\datayite$, while being mostly compressive of training target $\datayitr$~\cite{titsias2020information, tishby2000information}. Correspondingly, we would like to maximize the conditional mutual information $I(\datayite; \biggoal^1| \X_\task)$ and minimize $I(\datayitr; \biggoal^1 | \X_\task)$. The information bottleneck objective is:
\begin{equation}
\mathcal{L}_{IB}(\task) = I(\datayite; \biggoal^1| \X_\task) - \beta I(\datayitr; \biggoal^1 | \X_\task).
\label{eq:IB}
\end{equation}

We show the following lemma in appendix~\ref{app:derivation}:
\begin{lemma}
Given a task $\task$, maximizing the information bottleneck loss $\mathcal{L}_{IB}$ defined in \eqref{eq:IB} is equivalent to maximizing the weighted ELBO :
 \begin{align}
    \mathcal{L}_{wELBO}(\task) = \mathbf{E}_{\biggoal^1\sim q(\biggoal^1|\task)} \log p(\datayite|\biggoal^1,\dataxite)- \beta KL(q(\taskZ|\task)||p(\taskZ|\datatr).
\label{eq:weightLIB}
\end{align}
\end{lemma}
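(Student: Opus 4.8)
The plan is to treat $\mathcal{L}_{IB}$ as a variational information bottleneck and to bound each of its two conditional mutual-information terms separately, following the standard VIB recipe \cite{tishby2000information}. Since both $\datayite$ and $\datayitr$ are conditioned on the same inputs $\X_\task$ and on the same stochastic knowledge set $\biggoal^1$, the two terms can be manipulated independently and then recombined. First I would treat the ``relevance'' term as a lower bound and the ``compression'' term as an upper bound, so that $\mathcal{L}_{wELBO}$ emerges as a tractable lower bound on $\mathcal{L}_{IB}$, and maximizing one is equivalent (in the VIB sense) to maximizing the other.

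For the first term $I(\datayite; \biggoal^1 | \X_\task)$, which we want to \emph{maximize}, I would introduce the decoder $p(\datayite | \biggoal^1, \dataxite)$ as a variational surrogate for the intractable $p(\datayite | \biggoal^1, \X_\task)$. Writing $I(\datayite; \biggoal^1 | \X_\task) = \mathbf{E}[\log p(\datayite | \biggoal^1, \X_\task)] + H(\datayite | \X_\task)$ and using nonnegativity of the KL between the true conditional and its surrogate yields
\begin{equation}
I(\datayite; \biggoal^1 | \X_\task) \ge \mathbf{E}_{\biggoal^1\sim q(\biggoal^1|\task)} \log p(\datayite | \biggoal^1, \dataxite) + H(\datayite|\X_\task),
\end{equation}
where the entropy $H(\datayite|\X_\task)$ carries no learnable parameter and drops out of the optimization as an additive constant. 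This recovers the reconstruction term of $\mathcal{L}_{wELBO}$.

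For the second term $I(\datayitr; \biggoal^1 | \X_\task)$, which enters with a minus sign and is therefore to be \emph{minimized}, I would introduce the prior $p(\biggoal^1|\datatr)$ as a variational marginal for the intractable aggregate posterior and, again via nonnegativity of KL, obtain the upper bound $I(\datayitr; \biggoal^1 | \X_\task) \le \mathbf{E}\,KL\big(q(\biggoal^1|\task)\,\|\,p(\biggoal^1|\datatr)\big)$. The crucial and hardest step is then to reduce this KL over the high-dimensional knowledge set $\biggoal^1$ to the KL over the low-dimensional latent $\taskZ$ appearing in the statement. Here I would invoke the deterministic-transformation design of \sref{sec:update}: by \eref{eq:trans1}, \eref{eq:trans2}, and the inner-loop update \eref{eq:inner_update}, the variable $\biggoal^1$ is a fixed deterministic map $T(\taskZ)$ of the latent, and the \emph{same} $T$ is shared by posterior and prior. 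Since $q(\biggoal^1|\task)$ and $p(\biggoal^1|\datatr)$ are then the pushforwards of $q(\taskZ|\task)$ and $p(\taskZ|\datatr)$ under this common $T$, the KL is invariant under the transformation, giving $KL\big(q(\biggoal^1|\task)\,\|\,p(\biggoal^1|\datatr)\big) = KL\big(q(\taskZ|\task)\,\|\,p(\taskZ|\datatr)\big)$.

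I expect this invariance to be the main obstacle: strictly it requires $T$ to act injectively on the support, or an appeal to the change-of-variables argument in which the shared Jacobian cancels between numerator and denominator of the KL integrand; this is exactly what the amortized design of \method---routing all randomness through $\taskZ$ while keeping $p(\goali^0|\goal,\taskZ)$ and $p(\h_{\task}^0|\taskZ)$ deterministic---is meant to guarantee. Combining the lower bound on the relevance term, the upper bound on the compression term, and this KL reduction, and discarding the constant entropy, shows that maximizing $\mathcal{L}_{IB}$ is equivalent to maximizing $\mathcal{L}_{wELBO}$, which completes the proof.
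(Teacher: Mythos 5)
Your proposal is correct and follows essentially the same route as the paper's own proof: a variational lower bound on the relevance term $I(\datayite;\biggoal^1|\X_\task)$ via a surrogate decoder with the entropy absorbed as a constant, a variational upper bound on the compression term via a surrogate marginal/prior, and a reduction of the resulting KL over $\biggoal^1$ to $KL(q(\taskZ|\task)\|p(\taskZ|\datatr))$ by appealing to the shared deterministic (invertible) transformations of \eref{eq:trans1}, \eref{eq:trans2}, and \eref{eq:inner_update}. The only presentational difference is that you fold the paper's separate heuristic step of injecting $\datats$ into the posterior and $\datayitr$ into the prior directly into your statement of the compression bound, and you are in fact more explicit than the paper about the injectivity/pushforward caveat on which the KL reduction rests.
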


\begin{proof}To lower bound IB objective defined in \eref{eq:IB}, we derive the lower bound for first term $I(\datayite; \biggoal^1 | \X_{\task})$ and upper bound for second term $I(\datayitr; \biggoal^1 | \X_{\task})$. 
Further, we assume a distribution $q(\datayite, \biggoal^1|\X_{\task})$ as a variational approximation of the true distribution $p(\datayite, \biggoal^1|\X_{\task})$. 
\begin{equation}
\begin{split}
I(\datayite, \biggoal^1|\X_{\task})
&=\int p(\X_{\task})\left[\int q(\datayite, \biggoal^1|\X_{\task}) \log\dfrac{q(\datayite, \biggoal^1|\X_{\task})}{p(\datayite) q(\biggoal^1|X)}d\datayite d\biggoal^1\right]d\X_{\task}\\
&= \int p(\X_{\task})\left[\int q(\datayite, \biggoal^1 | \X_{\task}) \log\dfrac{q(\datayite| \biggoal^1, \X_{\task})}{p(\datayite)}d\datayite d\biggoal^1\right]d\X_{\task}
\end{split}
\label{eq:IB_term1}
\end{equation}

\begin{equation}
\begin{split}
    q(\biggoal^1|\X_{\task})&=\int q(\biggoal^1 | \datayitr, \X_{\task}) p(\datayitr ) d\datayitr \\
    &= \int q(\biggoal^1 | \datayitr , \X_{\task}) p(\datayitr ,\datayite ) d\datayitr d\datayite 
    \end{split}
\end{equation}

\begin{equation}
\begin{split}
    q(\datayite , \biggoal^1|\X_{\task})&=\int q(\biggoal^1, \datayitr ,\datayite   |  \X_{\task})  d\datayitr \\
    &=\int q(\biggoal^1, | \datayitr ,\datayite   ,  \X_{\task}) p(\datayitr ,\datayite   | \X_{\task})  d\datayitr \\
    &=\int q(\biggoal^1, | \datayitr   ,  \X_{\task}) p(\datayitr ,\datayite   | \X_{\task})  d\datayitr \\
   \end{split}
\end{equation}
The last part follows  from the fact that $\biggoal^1$ is independent of $\datayite $ given $[\X_{\task}, \datayitr ]$. Putting this together: 

\begin{equation}
q(\datayite |\biggoal^1, \X_{\task}) = \dfrac{\int p(\datayite ,\datayitr )q(\biggoal^1 | \datayitr , \X_{\task})d\datayitr }{\int p(\datayite ,\datayitr )q(\biggoal^1 | \datayitr , \X_{\task})d\datayitr d\datayite }
\end{equation}

However, the above conditional distribution $q(\datayite |\biggoal^1, \X_{\task})$ is intractable due to the unknown data distribution $p(\datayite ,\datayitr )$. To derive the upper bound, we introduce a variational approximation $ p_{\theta}(\datayite |\biggoal^1, \X_{\task})$ for $q(\datayite |\biggoal^1, \X_{\task})$. 

Take it into the \eref{eq:IB_term1}, we have:

\begin{equation}
\begin{split}
I(\datayite , \biggoal^1|\X_{\task}) &= \int p(\X_{\task}) \left[\int q(\datayite , \biggoal^1|\X_{\task})\log\dfrac{p_{\theta}(\datayite |\biggoal^1, \X_{\task})q(\datayite | \biggoal^1, \X_{\task})}{p_{\theta}(\datayite |\biggoal^1, \X_{\task})p(\datayite )}d\datayite d\biggoal^1\right]d\X_{\task}\\
&\geq \int p(\X_{\task}) \left[\int q(\datayite , \biggoal^1| \X_{\task})\log\dfrac{p_{\theta}(\datayite |\biggoal^1, \X_{\task})}{p(\datayite )}d\datayite d\biggoal^1\right]d\X_{\task}\\
&= \int p(\X_{\task}) \left[\int q(\datayite , \biggoal^1|\X_{\task})\log p_{\theta}(\datayite |\biggoal^1, \X_{\task})d\datayite d\biggoal^1\right]d\X_{\task} + C\\
&= \int q(\datayite , \biggoal^1, \X_{\task})\log p_{\theta}(\datayite |\biggoal^1, \X_{\task})d\datayite d\biggoal^1d\X_{\task} + C
\end{split}
\label{eq:IB_term1_app1}
\end{equation}

In the above equation, we use  $KL(q(\datayite |\biggoal^1,\X_{\task})|| p_{\theta}(\datayite |\biggoal^1, \X_{\task})) \geq 0$ in the second step.

The second term is irrelevant to our objective so we can treat it as a constant. Note that:
\begin{equation}
 q(\datayite , \biggoal^1, \X_{\task})
 =\int q(\biggoal^1|\datayitr , \X_{\task})p(\datayitr, \datayite|\X_{\task})p(\X_{\task})d\datayitr 
\end{equation}
Thus, an unbiased estimation of the first term is:
\begin{equation}
I(\datayite , \biggoal^1|\X_{\task}) \geq  \int  q(\biggoal^1 | \datayitr, \X_{\task})\log p_{\theta}(\datayite |\biggoal^1, \X_{\task}) d\biggoal^1.
\label{eq:upper_bound}
\end{equation}

We derive the upper bound for second term:
\begin{equation}
\begin{split}
I(\datayitr , \biggoal^1|\X_{\task}) 
&=\int p(\X_{\task})\left[\int q(\datayitr , \biggoal^1|\X_{\task}) \log\dfrac{q(\datayitr , \biggoal^1|\X_{\task})}{p(\datayitr ) q(\biggoal^1|\X_{\task})} d\datayitr d\biggoal^1\right]d\X_{\task}\\
&= \int p(\X_{\task})\left[\int q(\datayitr , \biggoal^1 | \X_{\task}) \log\dfrac{q(\biggoal^1 | \datayitr , \X_{\task})}{q(\biggoal^1|\X_{\task})}d\datayitr d\biggoal^1\right]d\X_{\task}
\end{split}
\label{eq:IB_term2}
\end{equation}
The denominator $q(\biggoal^1|\X_{\task}) = \int q(\biggoal^1|\datayitr , \X_{\task})p(\datayitr )d\datayitr $ is intractable for unknown $p(\datayitr )$. We  approximate it with $p_{\theta}(\biggoal^1|\X_{\task})$. With similar derivation, the second term is upper bounded by:
\begin{equation}
 I(\datayitr , \biggoal^1|\X_{\task}) \leq  \int q(\biggoal^1|\datayitr , \X_{\task})p(\datayitr , \X_{\task})\log\dfrac{q(\biggoal^1 | \datayitr , \X_{\task} )}{p_{\theta}(\biggoal^1|\X_{\task})}d\datayite d\datayitr d\biggoal^1.
\end{equation}

Similarly, its unbiased estimation is given as:
\begin{equation}
 I(\datayitr , \biggoal^1|\X_{\task}) \leq  \int q(\biggoal^1|\datayitr , \X_{\task})\log\dfrac{q(\biggoal^1 | \datayitr , \X_{\task} )}{p_{\theta}(\biggoal^1|\X_{\task})}d\biggoal^1.
\end{equation}

Combining two terms, we get the total unbiased estimation of the IB loss:
\begin{equation}
    L_{IB} = \mathbf{E}_{q(\biggoal^1 | \datayitr , \X_{\task})}\log p_{\theta}(\datayite |\biggoal^1, \X_{\task})  - \beta KL(q(\biggoal^1|\datayitr , \X_{\task})||p_{\theta}(\biggoal^1|\X_{\task})).
\label{eq:LIB_unbias}
\end{equation}

To incorporate target information, we inject the target variable $\datayite $ into posterior and $\datayitr $ into prior, and get the new approximation:
\begin{equation}
    L_{IB} = \mathbf{E}_{q(\biggoal^1 | \task)}\log p_{\theta}(\datayite |\biggoal^1, \X_{\task})
    - \beta KL(q(\biggoal^1|\task)||p_{\theta}(\biggoal^1|\datayitr , \X_{\task})).
\label{eq:LIB_unbias_np}
\end{equation}

Since $\goali^0=\vg^{Gate}_{\vw}(\goal, \taskZ), \h_{\task}^0=\vg^{Gate}_{\vbeta}(\taskZ)$, where $\vg^{Gate}_{\vw}, \vg^{Gate}_{\vbeta}$ are both deterministic and invertible mappings of $\taskZ$, we have $ p(\goali^0|\goal) = \delta(\goali^0=\vg^{Gate}_{\vw}(\taskZ, \goal)), p(\h_{\task}^0|\taskZ) = \delta(\h_{\task}^0=\vg^{Gate}_{\vbeta}(\taskZ))$. Moreover, $\h_{\task}^0, \goali^0$ are conditionally independent given $\taskZ$. Similarly, $\h_{\task}^1, \goali^1$ are deterministic function of $\h_{\task}^0$ and $\goali^0$. Thus, the second term in \eref{eq:LIB_unbias_np} can be replaced with the divergence between the posterior and prior distribution of $\taskZ$, i.e. $KL(q(\taskZ|\task)||p(\taskZ|\datayitr , \dataxitr))$.

We know look into the log likelihood term in \eref{eq:LIB_unbias}. Since the transitions $\taskZ \to \goali^0 \to \goali^1$ and $\taskZ \to \h_{\task}^0 \to \h_{\task}^1$ are deterministic:
\begin{equation}
\begin{split}
    \goali^1 = \goali^{0} -\nabla_{\goal} \mathcal{L}(\vf_{\goali^0}, \h_{\task}^0, \datatr)),\quad  \goali^0 = \vg^{Gate}_{\vw}(\goal, \vz), \quad \vz\sim q(\taskZ|\task)\\
    \h_{\task}^1 = \h_{\task}^0 - \nabla_{\h}\mathcal{L}(\vf_{\goali^0}, \h_{\task}^0, \datatr)),\quad \h_{\task}^0 = \vg^{Gate}_{\vbeta}(\vz).
    \end{split}
\label{eq:g_theta_post}
\end{equation}

According to the analysis, the approximation to be optimized is:
\begin{equation}
    L_{app} = \mathbf{E}_{\biggoal^1\sim q(\biggoal|\task)} \log p_{\theta}(\datayite | \goali^1, X^{te})
    - \beta KL(q(\taskZ|\task)||p(\taskZ|\datatr)).
\label{eq:LIB_app2}
\end{equation}
\end{proof}

\subsection{Ambiguous Binary Classification Results.}
\label{app:exp_binary}
\textbf{Task design.} In classification, task ambiguity is common when annotated data are limited. Images can share many attributes, and various combinations of them can be used for final decision-making. We evaluate our method on the ambiguous classification benchmark proposed in \cite{finn2018probabilistic}. The CelebA dataset contains cropped images of celebrity faces and a list of attributes that describe their appearance.  We split these attributes into training, validation, and test sets. During meta-training, we randomly sample two training attributes and form the positive class of images that share them. The negative class is formed by sampling the same number of images containing neither attribute. During meta-testing, training set images share three attributes. We construct three test sets by choosing two of the three attributes to define the positive class. The model learns to apply two attributes for decision making, but there are three combinations of two attributes for classification. Thus the task is ambiguous. We sample models from our distribution of solutions and assign them to the three test sets based on the loss values. If all test sets are covered with at least one model, the method can effectively discover all potential decision rules. The cover number is calculated as the average number of test sets that are covered. The coverage number for a deterministic method is $1$. As Table~\ref{table:amb_clf} shows, our method can 1) achieve better accuracy, 2) reach lower NLL, and 3) discover more decision rules compared to MAML.

\begin{table}[h!]
\centering
\caption{5-Shot Ambiguous Binary Classification.\label{table:amb_clf}}
\resizebox{0.45\textwidth}{!}{%
\begin{tabular}{c|ccc}
\hline
Model & Accuracy    & Coverage number & NLL  \\ \hline
MAML   & 77.924  & 1.00 &     0.454            \\
\method  & 79.698 & 1.13  &    0.439            \\\hline
\end{tabular}}
\end{table}
\subsection{Experiment setup.}
\label{app:exp_setup}

\textbf{2D Regression setup.} Meta distribution $\mathcal{T}$ contains 6 function families. Input $X = [x_1,x_2]\sim U(0.0, 5.0)$. The value for $x_2$ is fixed as 1 if only $x_1$ is used. For \textit{sinusoids} families : $y = a sin(wx_1 + b)+\epsilon$, where $a \sim U[0.1, 5.0], b \sim U[0, 2\pi], w\sim  U[0.8, 1.2]$; for \textit{line} families: $y = ax_1 + b+\epsilon$, where $a\sim  U[-3.0,3.0], b\sim U[-3.0,3.0]$;  for \textit{quadratic curves}: $y = ax_1^2 + bx_1 + c+\epsilon$, where $a\sim U[-0.2, 0.2], b\sim U[-2.0, 2.0], c\sim U[-3.0, 3.0]$; for cubic curves: $y = ax_1^3 + bx_1^2 + cx_1 + d+\epsilon$, where $a \sim U [-0.1, 0.1], b\sim U [-0.2, 0.2], c\sim U [-2.0, 2.0], d\sim U [-3.0, 3.0]$; for \textit{quadratic surface}: $y = ax_1^2 + bx_2^2+\epsilon$, where $a\sim U[-1.0, 1.0], b\sim U[-1.0, 1.0]$;  for \textit{ripple}: $y = sin(-a(x_1^2 + x_2^2)) + b+\epsilon$, where $a\sim U[-0.2,0.2], b\sim U[-3.0,3.0]$. 

\textbf{Model architecture for 2D regression.} We adopt the same base model as in~\cite{yao2020automated, finn2017model}, it contains 2 linear layer with 40 neurons followed by ReLU function.  For the task representative module, we use 2 linear layers with 80 neurons.

\textbf{Visualization for 2D regression.} See Figure~\ref{fig:2DReg_10shot}.
\begin{figure*}[thb]
  \centering 
    \begin{minipage}[b]{.24\textwidth} 
      \centering 
      \includegraphics[width=\linewidth]{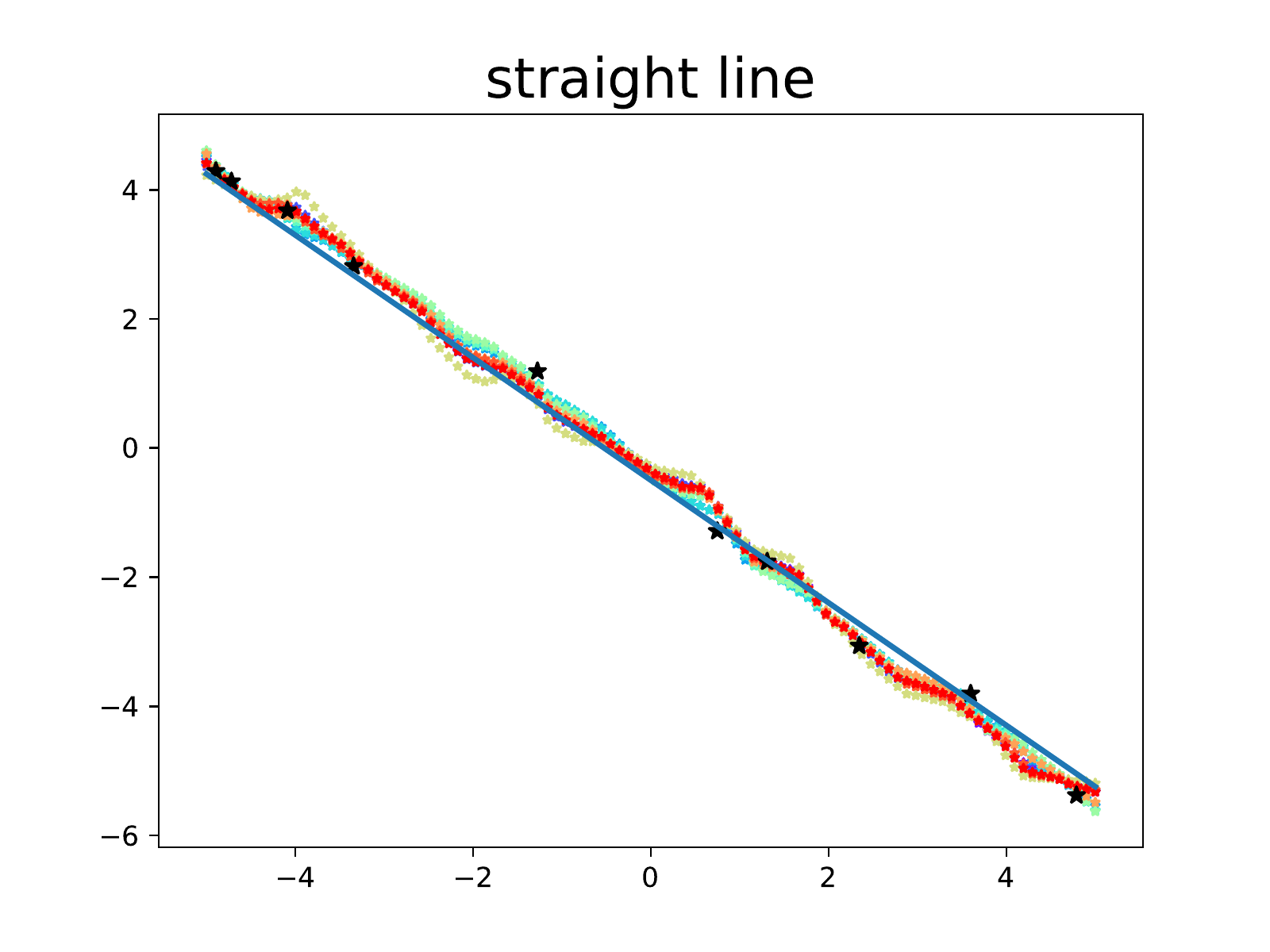}
    \end{minipage} 
    \begin{minipage}[b]{0.24\textwidth} 
      \centering 
      \includegraphics[width=1\linewidth]{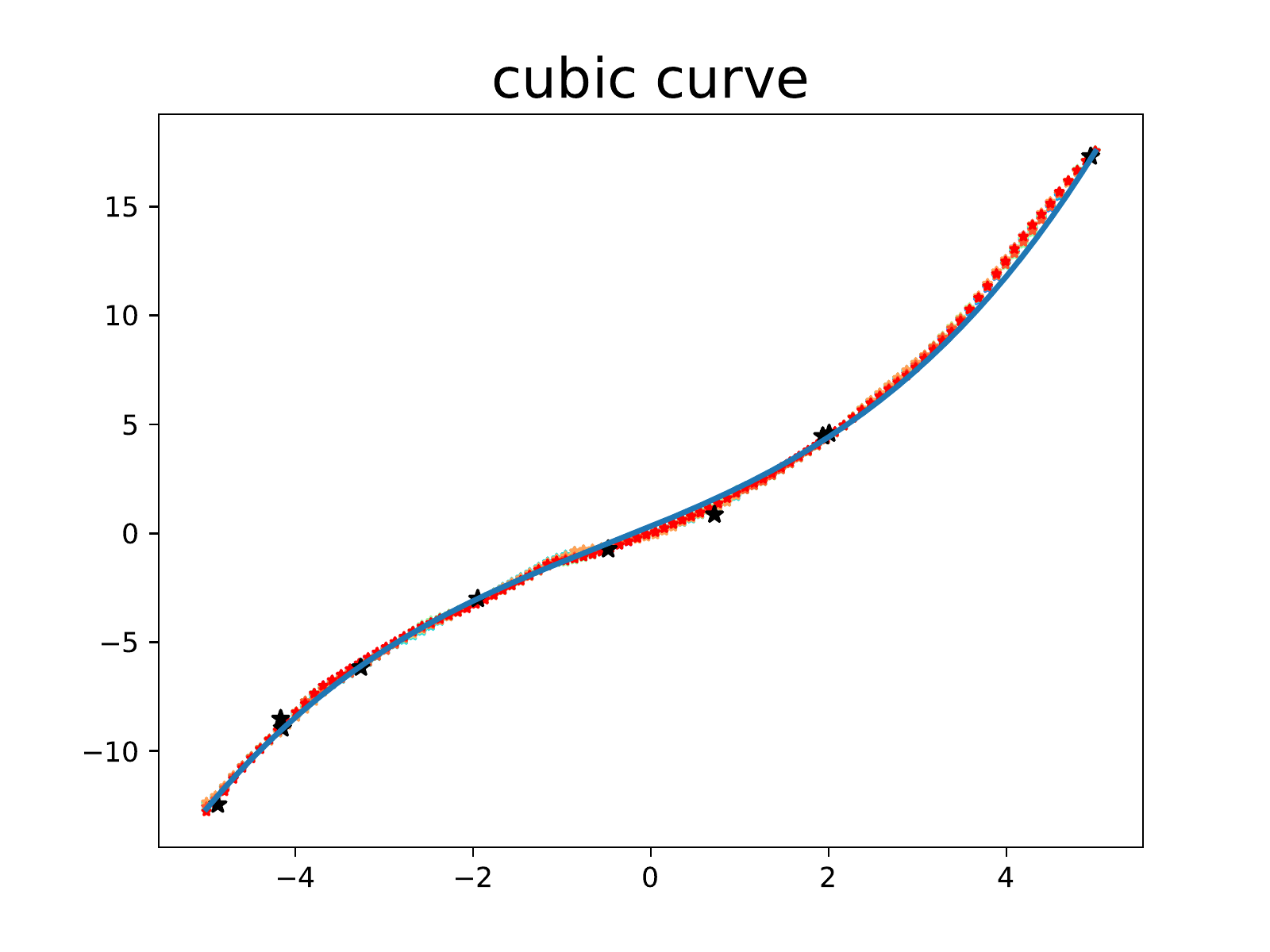}
    \end{minipage} 
    \begin{minipage}[b]{0.24\textwidth} 
      \centering 
      \includegraphics[width=1\linewidth]{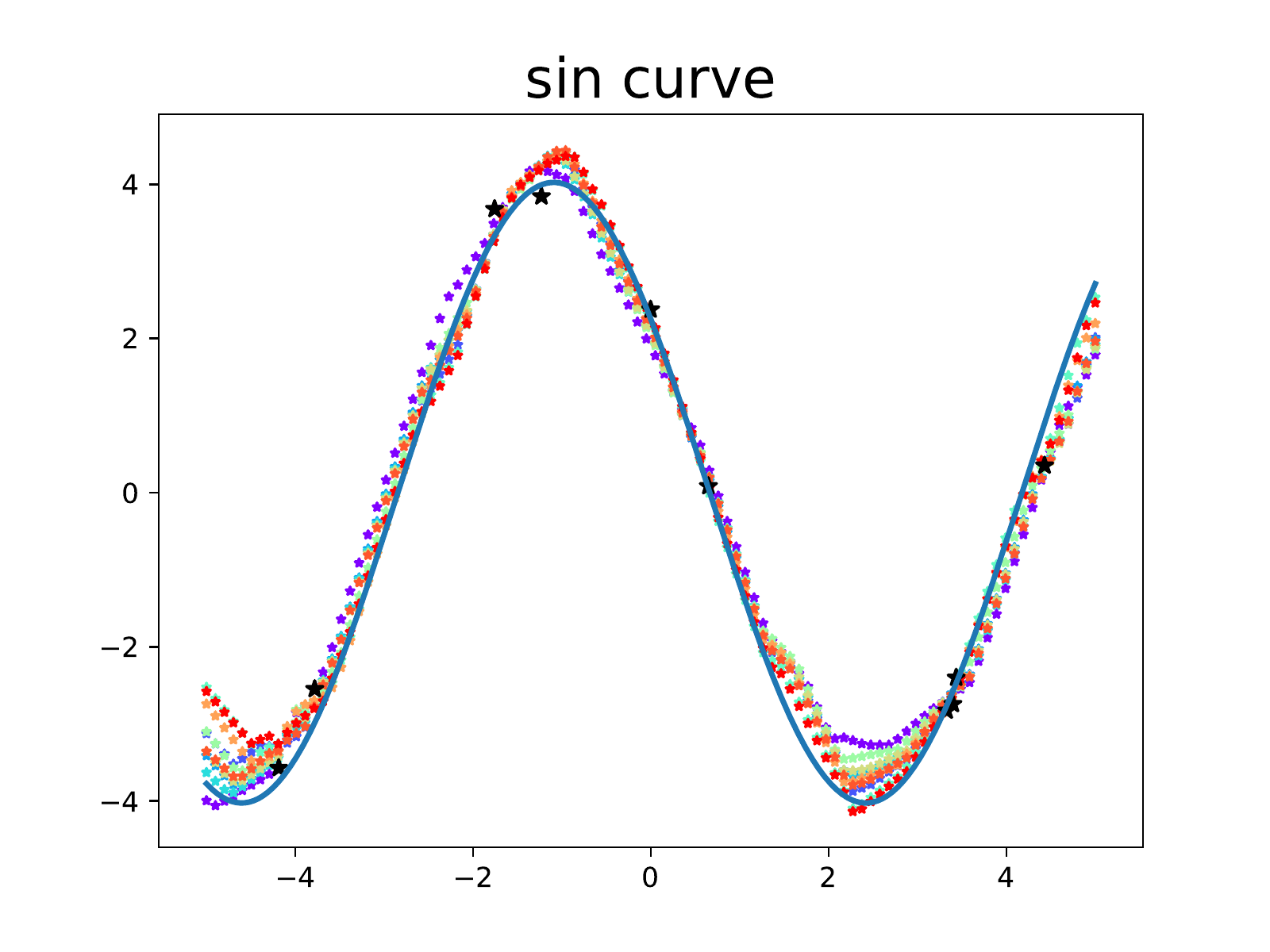}
    \end{minipage}
    \begin{minipage}[b]{0.24\textwidth} 
      \centering 
      \includegraphics[width=1\linewidth]{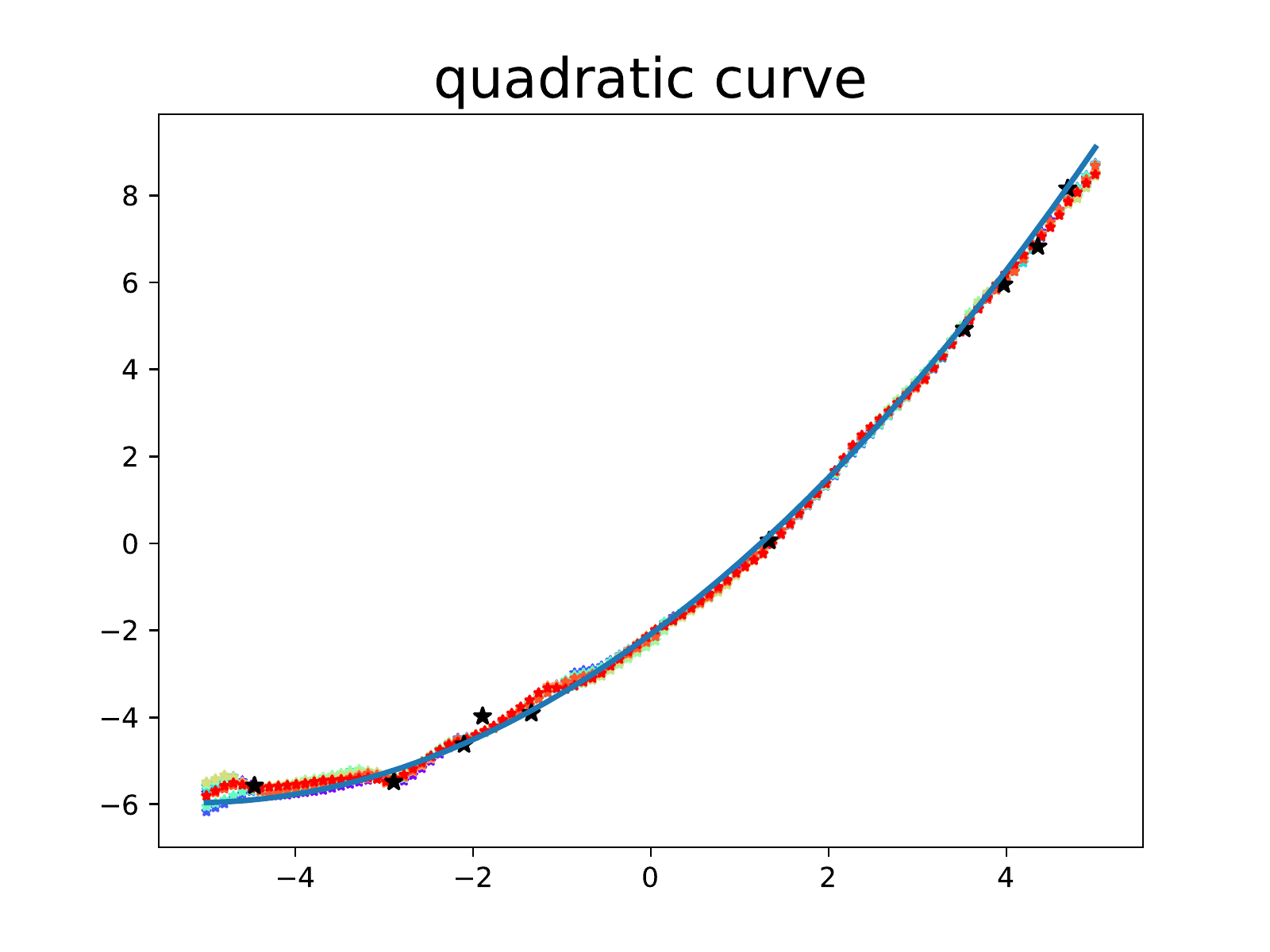}
    \end{minipage}
  \caption{\label{fig:2DReg_10shot}Qualitative Visualization of fitting curves. Black stars represent training set $\datatr$, 10 different samples of fitting curves are shown as colored dotted lines. The blue solid line is the true mapping.} 
\end{figure*}

\textbf{More results for 2D regression.}
During meta-training, we fixed the size of training set $|\datatr|$ as 10, the standard deviation for Gaussian noise $\sigma$ to be 0.3, during meta-testing, we can decrease the size of training set or increase the noise level such that tasks ambiguity can be more concerning, we visualize them in Figure~\ref{fig:2DReg_2_5shot}. The model can effectively reason over ambiguity as we vary the size of the training data or noise level. The sampled functions tend to span wider space as $|\datatr|$  decreases or the noise level increases. However, they stay faithful around those annotated training data.

\begin{figure*}[h]
  \centering 
      \includegraphics[width=1.0\linewidth]{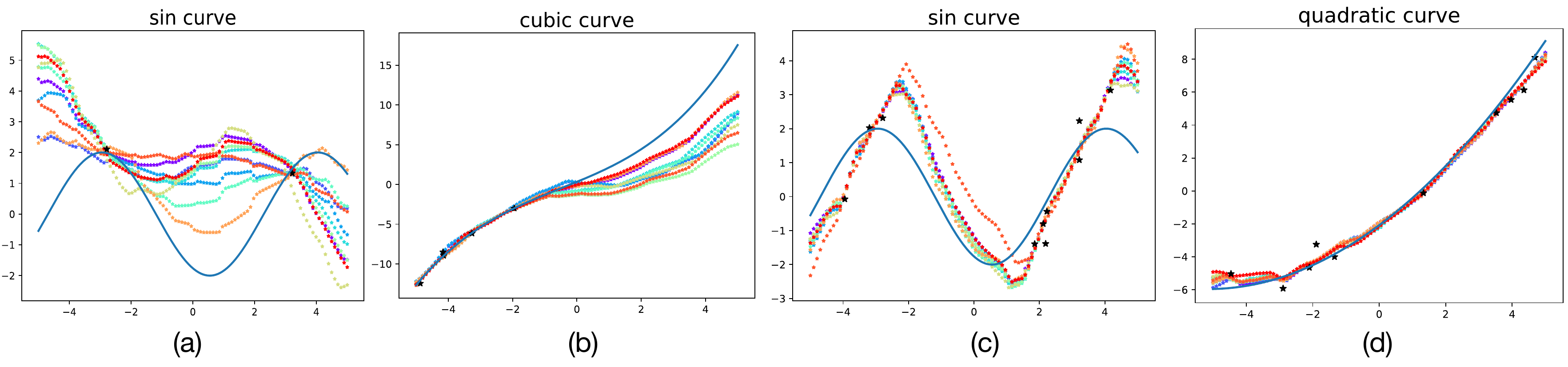}
\caption{\label{fig:2DReg_2_5shot}Few-shot 2D regression with various number of training data and noise level. (a) $|\datatr|=2, \sigma=0.3$ (b)  $|\datatr|=5, \sigma=0.3$, (c) $|\datatr|=10, \sigma=0.8$, (d)  $|\datatr|=10, \sigma=0.1$. Black star represents training data, dashed lines characterize different sampled models, the blue curve is the true mapping.}
\end{figure*}

\textbf{NOAA GSOD Dataset Details}. The data is available at \url{https://data.noaa.gov/dataset/dataset/global-surface-summary-of-the-day-gsod}. The dataset is large, so we reduce the size while preserving a wide range of years by using every $10$th year from $1969-2019$. Each file in the unzipped dataset corresponds to one year of data at a particular station. Files that do not contain at least $40$ days of data are ignored. Task number $i$ is created in the following way:
\begin{enumerate}
    \item We sample $40$ days of data that have valid temperature entires.
    \item We drop the columns ("STATION", "NAME", "TEMP\_ATTRIBUTES", "DEWP",
                "DEWP\_ATTRIBUTES",
                "PRCP\_ATTRIBUTES",
                "SLP\_ATTRIBUTES",
                "STP\_ATTRIBUTES",
                "VISIB\_ATTRIBUTES",
                "WDSP\_ATTRIBUTES",
                "MAX",
                "MIN",
                "\text{MAX}\_ATTRIBUTES",
                "MIN\_ATTRIBUTES",
                "LATITUDE", and
                "LONGITUDE")
    \item We convert the date column from (MM/DD/YYYY) to a float [0, 1] representing the time since the first day of that year.
    \item The ``FRSHTT" is a 6 bit binary string where each digit indicates the presence of fog, rain, snow, hail, thunder, and tornadoes respectively. We transform the ``FRSHTT" column into 6 binary columns.
    \item The GSOD dataset reports missing values with all $9$s, e.g. $99.99$, or $999.9$. We find and replace these values with $0.0$. We also replace NaN entries with $0.0$.
    \item The units of some input variables are adjusted to bring their values down to a smaller range. Pressure variables (``SLP" and ``STP") are converted from millibars to bars. Elevation is changed from meters to kilometers.
    \item The ``TEMP" variable is split from the data to become our target value.
\end{enumerate}

We use a 42k/5k/1k split to divide the files into train, val and test sets.

\textbf{Model architecture for weather prediction.}
Similar to 2D regression, the feature learner has two linear layers with 100 neurons followed by ReLU activation funcion. The mapping to task representation $\taskZ$ contains 3 layers with hidden dimension 40. 80, 200. The augmented dimension is set to be 20.

\textbf{Model runtime and compute.} The model trains on one GTX 2080 card. Training times vary by experiment, ranging from a few hours to a day.

\end{document}